\documentclass[12pt]{article}
\usepackage[utf8]{inputenc}
\usepackage{amsmath}
\usepackage{mathtools}
\usepackage[letterpaper,top=1in,bottom=1in,left=1in,right=1in]{geometry}
\usepackage{bm}
\usepackage{hyperref}
\usepackage{amsthm}
\renewcommand\a{\bm{a}}
\newcommand\bmu{\bm{\mu}}
\newcommand\g{\bm{g}}
\newcommand\w{\bm{w}}
\newcommand\x{\bm{x}}
\newcommand\y{\bm{y}}
\newcommand\z{\bm{z}}

\newcommand\bz{\bm{0}}
\newcommand\R{\bm{\mathrm{R}}}
\newcommand{\norm}[1] {\left \| #1 \right \|} 
\newtheorem{theorem}{Theorem}
\newtheorem{corollary}{Corollary}[theorem]

\title{Recovery of a mixture of Gaussians by sum-of-norms clustering}

\author{Tao Jiang\thanks{Department of Combinatorics \&  Optimization, University of Waterloo, Waterloo, Ontario,  Canada N2L 3G1, {\tt tao.jiang@uwaterloo.ca}.} \and  
Stephen Vavasis\thanks{Department of Combinatorics \&  Optimization, University of Waterloo, Waterloo, Ontario,  Canada N2L 3G1, {\tt vavasis@uwaterloo.ca}.  Research supported in part by a Discovery Grant from the Natural Science and Engineering Research Council (NSERC) of Canada.} \and Chen Wen Zhai\thanks{Department of Combinatorics \& Optimization and Department of Statistics \& Actuarial  Science,
 University of Waterloo, Waterloo, Ontario, Canada, N2L 3G1,  {\tt sabrina.zhai@edu.uwaterloo.ca}. }}
\date{}

\begin{document}

\maketitle

\begin{abstract}
Sum-of-norms clustering is a method for assigning  $n$ points in $\R^d$ to $K$ clusters,  $1\le K\le n$, using convex optimization.  Recently, Panahi et al.\ proved that sum-of-norms clustering is guaranteed to recover a mixture of Gaussians under the restriction that the number of samples is not too large.  The purpose of this note is to lift this restriction, i.e., show that sum-of-norms clustering with equal weights can recover a mixture of Gaussians even as the number of samples tends to infinity.  Our proof relies on an interesting characterization of clusters computed by sum-of-norms clustering that was developed inside a proof of the agglomeration conjecture by Chiquet et al.  Because we believe this theorem has independent interest, we restate and reprove the Chiquet et al.\ result herein.
\end{abstract}


\section{Introduction}
Clustering is perhaps the most central problem in unsupervised machine learning and has been studied for over 60 years \cite{shais}.  The problem may be stated informally as follows.  One is given $n$ points, $\a_1,\ldots,\a_n$ lying in $\R^d$.  One seeks to partition $\{1,\ldots,n\}$ into $K$ sets $C_1,\ldots,C_K$ such that the $\a_i$'s for $i\in C_m$ are closer to each other than to the $\a_i$'s for $i\in C_{m'}$, $m'\ne m$.  

Clustering is usually posed as a nonconvex optimization problem, and therefore prone to nonoptimal local minimizers, but 
Pelckmans et al.\ \cite{pelckmans}, Hocking et al.\ \cite{hocking}, and Lindsten et al.\ \cite{lindsten} proposed the following convex formulation for the clustering problem:
\begin{equation}
    \min_{\x_1,\ldots,\x_n\in\R^d} \frac{1}{2}\sum_{i=1}^n \norm{\x_i-\a_i}^2 +\lambda\sum_{i<j}\norm{\x_i-\x_j}.
    \label{eq:son-clustering}
\end{equation}
This formulation is known in the literature as sum-of-norms clustering, convex clustering, or clusterpath clustering.
  Let $\x_1^*,\ldots,\x_n^*$ be the optimizer.  (Note: \eqref{eq:son-clustering} is strongly convex, hence the optimizer exists and is unique.) 
  The assignment to clusters is given by the $\x_i^*$'s: for $i,i'$, if $\x_i^*=\x_{i'}^*$ then $i,i'$ are assigned to the same cluster, else they are assigned to different clusters.  It is apparent that for $\lambda=0$, each $\a_i$ is assigned to a different cluster (unless $\a_i=\a_{i'}$ exactly), whereas for $\lambda$ sufficiently large, the second summation drives all the $\x_i$'s to be equal (and hence there is one big cluster).  Thus, the parameter $\lambda$ controls the number of clusters produced by the formulation. 

Throughout this paper, we assume that all norms are Euclidean, although \eqref{eq:son-clustering} has also been considered for other norms.  In addition,  some authors insert nonnegative weights in front of the the terms in the above summations.  Our results, however, require all weights identically 1.

Panahi et al.\ \cite{Panahi} developed several recovery theorems as well as a first-order optimization method for solving \eqref{eq:son-clustering}.  Other authors, e.g., Sun et al.\ \cite{dsun1} have since extended these results.  One of Panahi et al.'s results pertains to a mixture of Gaussians, which is the following generative model for producing the data $\a_1,\ldots,\a_n$.  The parameters of the model are $K$ means $\bmu_1,\ldots,\bmu_K\in\R^d$, $K$ variances $\sigma_1^2,\ldots,\sigma_k^2$, and $K$ probabilities $w_1,\ldots,w_k$, all positive and summing to 1.  One draws $n$ i.i.d.\ samples as follows.  First, an index $m\in\{1,\ldots,K\}$ is selected at random according to probabilities $w_1,\ldots,w_K$.  Next, a point $\a$ is chosen according to the spherical Gaussian distribution $N(\bmu_m,\sigma_m^2I)$.

Panahi et al.\ proved that for the appropriate choice of $\lambda$,
sum-of-norms clustering formulation \eqref{eq:son-clustering} will exactly recover a mixture of Gaussians (i.e., each point will be labeled with $m$ if it was selected from $N(\bmu_m,\sigma_m^2I)$) provided that for all $m,m'$, $1\le m<m'\le K$,
\begin{equation}
\norm{\bmu_m-\bmu_{m'}}\ge \frac{CK\sigma_{\max}}{w_{\min}}\mathrm{polylog}(n).
\label{eq:panahibound}
\end{equation}
One issue with this bound is that as the number of samples $n$ tends to infinity, the bound seems to indicate that distinguishing the clusters becomes increasingly difficult (i.e., the $\bmu_m$'s have to be more distantly separated as $n\rightarrow\infty$).  

The reason for this aspect of their bound is that their proof technique requires a gap of positive width (i.e., a region of $\R^d$ containing no sample points) between $\{\a_i:i\in C_m\}$ and $\{\a_i:i\in C_{m'}\}$ whenever $m\ne m'$.  Clearly, such a gap cannot exist in the mixture-of-Gaussians distribution as the number of samples tends to infinity.

The purpose of this note is to prove that \eqref{eq:son-clustering} can recover a mixture of Gaussians even as $n\rightarrow\infty$.  This is the content of Theorem~\ref{thm:mainrecovery} in Section~\ref{sec:mainrecovery} below.  Naturally, under this hypothesis we cannot hope to correctly label all samples since, as $n\rightarrow\infty$, some of the samples associated with one mean will be placed arbitrarily close to another mean.  Therefore, we are content in showing that \eqref{eq:son-clustering} can correctly cluster the points lying within some fixed number of standard-deviations for each mean.  Radchenko and Mukherjee \cite{Radchenko} have previously analyzed the special case of mixture of Gaussians with $K=2$, $d=1$ under slightly different hypotheses.

Our proof technique requires a cluster characterization theorem for sum-of-norms clustering derived by Chiquet et al.\ \cite{chiquet}.  This theorem is not stated by these authors as a theorem, but instead appears as a sequence of steps inside a larger proof in a ``supplementary material'' appendix to their paper.  Because we believe that this theorem is of independent interest, we restate it below and for the sake of completeness provide the proof (which is the same as the proof appearing in Chiquet et al.'s supplementary material).  This material appears in Section~\ref{sec:chiquetproof}.

\section{Cluster characterization theorem}
\label{sec:chiquetproof}

The following theorem is due to Chiquet et al.\ \cite{chiquet} but is not stated as a theorem by these authors; instead it appears as a sequence of steps in a proof of the agglomeration conjecture.  Refer to the next section for a discussion of the agglomeration conjecture.  We restate the theorem here because it is needed for our analysis and because we believe it is of independent interest.

\begin{theorem}
Let $\x_1^*,\ldots,\x_n^*$ denote the optimizer of \eqref{eq:son-clustering}.  For notational ease, let  $\x^*$ denote the concatenation of these vectors into a single $nd$-vector.
Suppose that $C$ is a nonempty subset of $\{1,\ldots,n\}$.

(a) Necessary condition: If for some $\hat\x \in \R^d$, $\x_i^*=\hat\x$ for $i\in C$ and $\x_i^*\ne\hat\x$ for $i\notin C$ (i.e., $C$ is exactly one cluster determined by \eqref{eq:son-clustering}), then there exist $\z_{ij}^*$ for $i,j\in C$, $i\ne j$, which solve
\begin{equation}
\begin{aligned}
\a_i-\frac{1}{|C|}\sum_{l\in C}\a_l&=
\lambda\sum_{j\in C-\{i\}} \z_{ij}^* &&\forall i\in C,\\
\norm{\z_{ij}^*} &\leq 1 &&\forall i,j\in C, i\ne j, \\ 
\z_{ij}^* &= -\z_{ji}^*&& \forall i,j\in C,  i\ne j.
\end{aligned}
\label{eq:zstar1}
\end{equation}

(b) Sufficient condition: Suppose there exists a solution $\z_{ij}^* $ for $j\in C-\{i\}$, $i\in C$ to the conditions \eqref{eq:zstar1}. Then there exists an $\hat\x \in \R^d$ such that the minimizer $\x^*$ of \eqref{eq:son-clustering} satisfies $\x_i^*=\hat\x$ for $i\in C$.
\label{thm:clustchar}
\end{theorem}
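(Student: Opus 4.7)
The plan is to deduce both directions from the subdifferential optimality condition for the strongly convex problem \eqref{eq:son-clustering}. Because $\tfrac{1}{2}\norm{\x_i-\a_i}^2$ is smooth and the subdifferential of $\norm{\x_i-\x_j}$ with respect to $\x_i$ is the singleton $\{(\x_i-\x_j)/\norm{\x_i-\x_j}\}$ when $\x_i\ne\x_j$ and the closed unit ball otherwise, $\x^*$ minimizes \eqref{eq:son-clustering} if and only if there exist $\z_{ij}\in\R^d$ for $i\ne j$ satisfying $\z_{ij}=-\z_{ji}$, $\norm{\z_{ij}}\le 1$, $\z_{ij}=(\x_i^*-\x_j^*)/\norm{\x_i^*-\x_j^*}$ whenever $\x_i^*\ne\x_j^*$, and
\begin{equation*}
\x_i^*-\a_i+\lambda\sum_{j\ne i}\z_{ij}=\bz \qquad \text{for every } i. \tag{$\star$}
\end{equation*}
Both parts of the theorem will fall out of $(\star)$ once one separates interior ($j\in C$) from exterior ($j\notin C$) contributions and averages over $C$.

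For part (a), I would fix any set of multipliers $\{\z_{ij}\}$ certifying optimality of $\x^*$. For $i\in C$ and $j\notin C$ the multiplier $\z_{ij}=(\hat\x-\x_j^*)/\norm{\hat\x-\x_j^*}$ is the same for every $i\in C$; denote this common value by $\w_j$. Summing $(\star)$ over $i\in C$ makes the interior double sum $\sum_{i\in C}\sum_{j\in C-\{i\}}\z_{ij}$ vanish by antisymmetry and produces
\begin{equation*}
|C|\hat\x-\sum_{i\in C}\a_i+\lambda|C|\sum_{j\notin C}\w_j=\bz,
\end{equation*}
which rearranges to $\lambda\sum_{j\notin C}\w_j=\bar\a-\hat\x$ with $\bar\a:=|C|^{-1}\sum_{l\in C}\a_l$. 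Substituting this identity back into $(\star)$ at a single $i\in C$ cancels the exterior contribution and leaves the asserted equation $\lambda\sum_{j\in C-\{i\}}\z_{ij}=\a_i-\bar\a$, with $\z_{ij}^*:=\z_{ij}$ inheriting antisymmetry and the unit-ball bound from the KKT conditions.

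For part (b), the idea is to construct the global optimum by solving a restricted problem in which the coordinates indexed by $C$ are collapsed to a single variable. Minimize over $\hat\x\in\R^d$ and $\{\y_j\}_{j\notin C}$ the strongly convex objective
\begin{equation*}
\tfrac{1}{2}\sum_{i\in C}\norm{\hat\x-\a_i}^2 + \tfrac{1}{2}\sum_{j\notin C}\norm{\y_j-\a_j}^2 + \lambda|C|\sum_{j\notin C}\norm{\hat\x-\y_j} + \lambda\sum_{\substack{j<k \\ j,k\notin C}}\norm{\y_j-\y_k},
\end{equation*}
obtained by substituting $\x_i=\hat\x$ for $i\in C$ into \eqref{eq:son-clustering}. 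Let $(\hat\x,\{\y_j^*\})$ be its unique minimizer, and let $\w_j\in\partial_{\hat\x}\norm{\hat\x-\y_j^*}$ and $\z_{jk}^{(R)}$ be its KKT multipliers; the optimality condition at $\hat\x$ yields $\lambda\sum_{j\notin C}\w_j=\bar\a-\hat\x$. I then construct a candidate primal-dual pair for the full problem by setting $\x_i=\hat\x$ for $i\in C$, $\x_j=\y_j^*$ for $j\notin C$, using the hypothesized $\z_{ij}^*$ as interior multipliers, $\w_j$ (the same value for every $i\in C$) as cross multipliers with $\z_{ji}:=-\w_j$, and $\z_{jk}^{(R)}$ as exterior multipliers. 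Verifying $(\star)$ is then routine: at $j\notin C$ it reproduces the restricted KKT equation at $\y_j^*$, while at $i\in C$ it simplifies, using $\lambda\sum_{j\notin C}\w_j=\bar\a-\hat\x$, precisely to the hypothesis $\lambda\sum_{j\in C-\{i\}}\z_{ij}^*=\a_i-\bar\a$. Uniqueness of the minimizer of \eqref{eq:son-clustering} then forces $\x_i^*=\hat\x$ for every $i\in C$.

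The main obstacle I anticipate is the bookkeeping in part (b): weaving the hypothesized $\z_{ij}^*$ together with the dual variables arising from the restricted problem into a single global certificate, and checking antisymmetry across the $C$/$C^c$ interface (which holds because $\partial_{\x_j}\norm{\x_i-\x_j}=-\partial_{\x_i}\norm{\x_i-\x_j}$) as well as the unit-ball bound (automatic for subgradients of a norm). A minor edge case is the possibility that $\hat\x=\y_j^*$ for some $j\notin C$; then $\w_j$ is not uniquely determined, but any element of the closed unit ball satisfying the restricted KKT equations will do, and the rest of the argument goes through unchanged.
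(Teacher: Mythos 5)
Your proposal is correct and follows essentially the same route as the paper: part (a) by summing the KKT conditions over $C$ and using antisymmetry, and part (b) by forming the reduced problem with the $C$-coordinates collapsed (your objective $\tfrac12\sum_{i\in C}\norm{\hat\x-\a_i}^2$ differs from the paper's $\tfrac{|C|}{2}\norm{\x-\tilde\a}^2$ only by an additive constant) and splicing the hypothesized interior multipliers together with the reduced problem's dual variables into a global optimality certificate. No substantive differences to report.
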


\noindent
{\bf Note}: This theorem is an almost exact characterization of clusters that are determined by formulation \eqref{eq:son-clustering}.  The only gap between the necessary and sufficient conditions is that the necessary condition requires that $C$ be exactly all the points in a cluster, whereas the sufficient condition is sufficient for $C$ to be a subset of the points in a cluster.  The sufficient condition is notable because it does not require any hypothesis about the other $n-|C|$ points occurring in the input.

\begin{proof} (Chiquet et al.)
\underline{Proof for Necessity (a)} \\
As $\x^*$ is the minimizer of the problem (\ref{eq:son-clustering}), and this objective
function, call it $f(\x)$, is convex, it follows
that $\bz\in\partial f(\x^*)$, where $\partial f(\x^*)$ denotes the subdifferential, that is, the set of subgradients of 
$f$ at $\x^*$. (See, e.g., \cite{hiriarturrutylemarechal} for background on convex analysis).  Written explicitly in terms of the derivative of the squared-norm and subdifferential of the norm, this means that $\x^*$ satisfies the following condition:
\begin{equation}
    \x_i^*-\a_i+\lambda\sum_{j\ne i}\w_{ij}^*=\bz \qquad\forall i=1,\ldots,n,
    \label{eq:KKTcond}
\end{equation}
where $\w_{ij}^*$, $i=1,\ldots,n$, $j=1,\ldots,n$, $i\ne j$, are subgradients of the Euclidean norm function
satisfying
$$\w_{ij}^* = \left\{
\begin{array}{ll}
  \frac{\x_i^*-\x_j^*}{\norm{\x_i^*-\x_j^*}}, & \mbox{for $\x_i^*\ne \x_j^*$}, \\
  \mbox{arbitrary point in $B(\bz,1)$}, & \mbox{for $\x_i^*=\x_j^*$},
\end{array}
\right.
$$
with the requirement that $\w_{ij}^*=-\w_{ji}^*$ in the second case.
Here, $B(\x,r)$ is notation for the closed Euclidean ball centered at $\x$ of radius $r$.
Since $\x_i^*=\hat\x$ for $i\in C$, $\x_i^*\ne\hat\x$ for $i\notin C$, the KKT condition for $i\in C$ is rewritten as 
\begin{equation}
\hat\x-\a_i+\lambda\sum_{j\notin C}\frac{\hat \x-\x_j^*}{\norm{\hat\x-\x_j^*}}
+\lambda\sum_{j\in C-\{i\}} \w_{ij}^*=\bz,
\label{eq:hatx1}
\end{equation}

Define $\z_{ij}^*=\w_{ij}^*$ for $i,j\in C$, $i\ne j$.  Then
$$\norm{\z_{ij}^*} \leq 1, \z_{ij}^* = -\z_{ji}^*, \forall i,j\in C, i\ne j.$$
Substitute $\w_{ij}^* = \z_{ij}^*$ into the equation (\ref{eq:hatx1}) to obtain
\begin{equation}
\hat\x-\a_i+\lambda\sum_{j\notin C}\frac{\hat \x-\x_j^*}{\norm{\hat\x-\x_j^*}}
+\lambda\sum_{j\in C-\{i\}} \z_{ij}^*=\bz,
\label{eq:hatx4}
\end{equation}
Sum the preceding equation over $i\in C$, noticing that the last term
cancels out, leaving
$$
  |C|\hat\x-\sum_{i\in C}\a_i+\lambda|C|\sum_{j\notin C}\frac{\hat \x-\x_j^*}{\norm{\hat\x-\x_j^*}}=\bz,
$$
which is rearranged to (renaming $i$ to $l$):
\begin{equation}
  \lambda\sum_{j\notin C}\frac{\hat \x-\x_j^*}{\norm{\hat\x-\x_j^*}}=
  -\hat\x+\frac{1}{|C|}\sum_{l\in C}\a_l.
  \label{eq:hatx2}
\end{equation}
Subtract \eqref{eq:hatx2} from \eqref{eq:hatx4}, simplify and 
rearrange to obtain
\begin{equation}
\a_i-\frac{1}{|C|}\sum_{l\in C}\a_l=
\lambda\sum_{j\in C-\{i\}} \z_{ij}^* \qquad\forall i\in C,
\label{eq:zstar2}
\end{equation}
as desired.

\underline{Proof for Sufficiency (b)}\\
We will show that at the solution of \eqref{eq:son-clustering},
all the $\x_i^*$'s for $i\in C$ have a common value under the hypothesis that $\z_{ij}^* $ is a solution to the equation \eqref{eq:zstar1} for $i,j\in C$, $i\ne j$.

First, define the following intermediate 
problem.  Let $\tilde \a$ denote the centroid of $\a_i$ for $i\in C$:
$$\tilde \a = \frac{1}{|C|}\sum_{l\in C}\a_l.$$
Consider the weighted problem sum-of-norms clustering
problem with unknowns as follows: one unknown $\x\in\R^d$ is associated
with $C$, and one unknown $\x_j$ is associated with each $j\notin C$
(for a total of with $n-|C|+1$ unknown vectors):
\begin{equation}
  \min_{\x;\x_j}
  \frac{|C|}{2}\cdot \norm{\x-\tilde\a}^2 + \frac{1}{2}\sum_{j\notin C}\norm{\x_j-\a_j}^2
  +\lambda|C|\sum_{j\notin C} \norm{\x-\x_j} +
  \lambda\sum_{\substack {{i,j\notin C} \\{i<j}}}\norm{\x_i-\x_j}.
  \label{eq:sonclust2}
\end{equation}
This problem, being strongly convex, has a unique optimizer; denote
the optimizing vectors $\tilde \x$ and $\tilde \x_j$ for $j\notin C$.

First, let us
consider the optimality conditions for \eqref{eq:sonclust2}, which are:
\begin{align}
  |C|(\tilde\x-\tilde\a)+\lambda|C|\sum_{j\notin C}\g_j&=\bz,
  \label{eq:sonclust2_opt1} \\
  \tilde\x_i-\a_i-\lambda|C|\g_i+\lambda\sum_{j\notin C\cup\{i\}}\y_{ij}&=\bz
  \qquad \forall i\notin C,
  \label{eq:sonclust2_opt2}
\end{align}
with subgradients defined as follows:
$$\g_j=\left\{
\begin{array}{ll}
  \frac{\tilde\x-\tilde\x_j}{\norm{\tilde\x-\tilde\x_j}}, & \mbox{for $\tilde\x_j\ne\tilde\x$}, \\
    \mbox{arbitrary in $B(\bz,1)$}, & \mbox{for $\tilde\x_j=\tilde\x$},
\end{array}
\right.
\qquad\forall j\notin C,$$
and
$$\y_{ij} = \left\{
\begin{array}{ll}
  \frac{\tilde\x_i-\tilde\x_j}{\norm{\tilde\x_i-\tilde\x_j}}, &
  \mbox{for $\tilde\x_i\ne\tilde\x_j$}, \\
  \mbox{arbitrary in $B(\bz,1)$}, &
  \mbox{for $\tilde\x_i=\tilde\x_j$},
\end{array}
\right.
\qquad\forall i,j\notin C, i\ne j,$$
with the proviso that in the second case, $\y_{ij}=-\y_{ji}$.

We claim that the solution for \eqref{eq:son-clustering}
given by defining $\x_i^*=\tilde\x$ for
$i\in C$ while keeping the $\x_j^*=\tilde\x_j$ for $j\notin C$, where
$\tilde\x$ and $\tilde\x_j$ are the optimizers for \eqref{eq:sonclust2}
as in the last few paragraphs, 
is optimal for \eqref{eq:son-clustering}, which proves the main result.
To show that this solution is optimal for \eqref{eq:son-clustering},
we need to provide subgradients
to establish the necessary condition. Define
$\w_{ij}$ to be the subgradients of $\x_i\mapsto \norm{\x_i- \tilde\x_j^*}$ evaluated
at $\tilde\x_i^*$ as follows:
\begin{align*}
  \w_{ij}&=\g_{j}  && \mbox{for $i\in C, j\notin C$}, \\
  \w_{ij}&=\y_{ij} &&\mbox{for $i,j\notin C$, $i\ne j$}, \\
  \w_{ij}&=\z_{ij}^*&&\mbox{for $i,j\in C$, $i\ne j$,}
\end{align*}
Before confirming that the necessary condition is satisfied, we first
need to confirm that these are all valid subgradients.  In the
case that $i\in C$, $j\notin C$, we have constructed $\g_{j}$
to be a valid subgradient of $\x\mapsto \norm{\x-\tilde \x_j}$ evaluated
at $\tilde \x$, and we have taken $\x_i^*=\tilde\x$, $\x_j^*=\tilde\x_j$.

In the case that $i,j\notin C$, we have construct $\y_{ij}$
to be a valid subgradient of $\x\mapsto \norm{\x-\tilde\x_j}$
evaluated at $\tilde\x_i$, and we have taken $\x_i^*=\tilde\x_i$, $\x_j^*=\tilde\x_j$. 

In the case that $i,j\in C$, by construction $\x_i^*=\x_j^*=\tilde\x$,
so any vector in $B(\bz,1)$ is a valid subgradient of $\x\mapsto\norm{\x-\tilde\x_j}$
evaluated $\tilde\x_i$. Note that since
$\z_{ij}^*\in B(\bz,1)$, then $\w_{ij}$ defined above also lies in $B(\bz,1)$.

Now we check the necessary conditions for optimality in \eqref{eq:son-clustering}.  First,
consider an $i\in C$:
\begin{align*}
  \tilde \x_i^*-\a_i+\lambda\sum_{j\ne i}\w_{ij}
  &=
  \tilde \x - \a_i+\lambda \sum_{j\in C-\{i\}}\w_{ij}+ \lambda\sum_{j\notin C}\w_{ij} \\
  &=
  \tilde \x - \a_i+\lambda \sum_{j\in C-\{i\}}\z_{ij}^*+\lambda\sum_{j\notin C}\g_j
  \\
  &=
  \tilde \x - \a_i+\a_i-\frac{1}{|C|}\sum_{l\in C}\a_l
  +\lambda\sum_{j\notin C}\g_j
  &&\mbox{(by \eqref{eq:zstar1})}
  \\
  &=
  \tilde\x -\tilde\a  +\lambda\sum_{j\notin C}\g_j
  \\
  & = \bz
  &&\mbox{(by \eqref{eq:sonclust2_opt1}).}
\end{align*}
Then we check for $i\notin C$:
\begin{align*}
  \tilde \x_i^*-\a_i+\lambda\sum_{j\ne i}\w_{ij}
  &=
  \tilde \x_i-\a_i+\lambda\sum_{j\in C}\w_{ij}+\lambda \sum_{j\notin C\cup\{i\}}
  \w_{ij} \\
  &= 
  \tilde \x_i-\a_i+\lambda\sum_{j\in C}(-\g_i)+\lambda \sum_{j\notin C\cup\{i\}}
  \y_{ij} \\
  & =
  \tilde \x_i-\a_i-\lambda|C|\g_i+\lambda \sum_{j\notin C\cup\{i\}}
  \y_{ij} \\
  &= \bz && \mbox{(by \eqref{eq:sonclust2_opt2}).}
\end{align*}
\end{proof}

\section{Agglomeration Conjecture}
Recall that when $\lambda=0$, each $\a_i$ is in its own cluster in the solution to \eqref{eq:son-clustering} (provided the $\a_i$'s are distinct), whereas for sufficiently large $\lambda$, all the points are in one cluster.
Hocking et al.\ \cite{hocking} conjectured that sum-of-norms clustering with equal weights has the following agglomeration property: as $\lambda$ increases, clusters merge with each other but never break up.  This means that the solutions to \eqref{eq:son-clustering} as $\lambda$ ranges over $[0,\infty)$ induce a tree of hierarchical clusters on the data.

This conjecture was proved by Chiquet et al.\ \cite{chiquet} using Theorem~\ref{thm:clustchar}. 
Consider a $\bar \lambda \geq \lambda$ and its corresponding sum-of-norms cluster model:
\begin{equation}
\min_{\x_1,\ldots,\x_n}\frac{1}{2}\sum_{i=1}^n\norm{\x_i-\a_i}^2
+\bar \lambda \sum_{i<j}\norm{\x_i-\x_j}.
\label{eq:sonclust3} 
\end{equation}

\begin{corollary}
(Chiquet et al.)
If there is a $C$ such that minimizer $\x^*$ of \eqref{eq:son-clustering} satisfies $\x_i^*=\hat\x$ for $i\in C$, $\x_i^*\ne\hat\x$ for $i\notin C$ for some $\hat\x \in \R^d$, then there exists an $\hat\x' \in \R^d$ such that the minimizer of \eqref{eq:sonclust3}, $\bar\x^*$, satisfies $\bar\x_i^*=\hat\x'$ for $i\in C$.
\end{corollary}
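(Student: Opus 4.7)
The plan is to chain together the two parts of Theorem~\ref{thm:clustchar} with a simple rescaling of dual certificates. By hypothesis, $C$ is exactly one cluster for the minimizer of \eqref{eq:son-clustering} at parameter $\lambda$, so Theorem~\ref{thm:clustchar}(a) supplies vectors $\z_{ij}^*\in\R^d$ for $i,j\in C$, $i\ne j$, satisfying the three conditions in \eqref{eq:zstar1}. The target problem \eqref{eq:sonclust3} uses $\bar\lambda\ge\lambda$, and I intend to apply Theorem~\ref{thm:clustchar}(b) at $\bar\lambda$ to the same set $C$; for this I must produce certificates $\bar\z_{ij}^*$ that satisfy the analogue of \eqref{eq:zstar1} with $\lambda$ replaced by $\bar\lambda$.

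I would simply set $\bar\z_{ij}^* := (\lambda/\bar\lambda)\,\z_{ij}^*$ and verify the three requirements. The affine equation transfers cleanly because $\bar\lambda\sum_{j\in C-\{i\}}\bar\z_{ij}^* = \lambda\sum_{j\in C-\{i\}}\z_{ij}^* = \a_i-\frac{1}{|C|}\sum_{l\in C}\a_l$, the last equality being exactly the equation supplied by part (a). Antisymmetry $\bar\z_{ij}^* = -\bar\z_{ji}^*$ is inherited from $\z_{ij}^*$ by scalar multiplication. The norm bound $\norm{\bar\z_{ij}^*} = (\lambda/\bar\lambda)\norm{\z_{ij}^*} \le \lambda/\bar\lambda \le 1$ uses both $\norm{\z_{ij}^*}\le 1$ and the hypothesis $\bar\lambda\ge\lambda$.

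With these rescaled certificates in place, Theorem~\ref{thm:clustchar}(b), applied to problem \eqref{eq:sonclust3} with the subset $C$, immediately furnishes an $\hat\x'\in\R^d$ such that $\bar\x_i^* = \hat\x'$ for all $i\in C$, which is the stated conclusion. There is no genuine obstacle, since the substantive content was already packaged into Theorem~\ref{thm:clustchar}. The one subtle point, and the reason the argument works only in the direction $\bar\lambda\ge\lambda$ (agglomeration, never fragmentation), is that shrinking the certificates by the factor $\lambda/\bar\lambda\le 1$ simultaneously keeps them inside the unit ball and compensates for the change in the $\lambda$-multiplier in the affine equation; in the opposite direction $\bar\lambda<\lambda$, rescaling would push the certificates outside the unit ball, so no analogous shortcut is available.
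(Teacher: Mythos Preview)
Your proof is correct and follows essentially the same approach as the paper: apply Theorem~\ref{thm:clustchar}(a) at $\lambda$ to obtain the certificates $\z_{ij}^*$, rescale them by $\lambda/\bar\lambda$, and invoke Theorem~\ref{thm:clustchar}(b) at $\bar\lambda$. The paper's argument is terser but identical in substance.
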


The corollary follows from Theorem~\ref{thm:clustchar}.  If $C$ is a cluster in the solution of \eqref{eq:son-clustering}, then by the necessary condition, there exist multipliers $\z_{ij}^*$ satisfying \eqref{eq:zstar1} for $\lambda$.  If we scale each of these multipliers by $\lambda/\bar\lambda$, we now obtain a solution to \eqref{eq:zstar1} for with $\lambda$ replaced by $\bar\lambda$, and the theorem states that this is sufficient for the points in $C$ to be in the same cluster in the solution to \eqref{eq:sonclust3}.

It should be noted that Hocking et al.\ construct an example of unequally-weighted sum-of-norms clustering in which the agglomeration property fails.  It is still mostly an open question to characterize for which norms and for which families of unequal weights the agglomeration property holds.  Refer to Chi and Steinerberger \cite{Chi} for some recent progress.

\section{Mixture of Gaussians}
\label{sec:mainrecovery}

In this section, we present our main result about recovery
of mixture of Gaussians.  As noted in the introduction, a theorem stating that every point is labeled correctly is not possible in the setting of $n\rightarrow\infty$, so we settle for a theorem stating that points within a constant number of standard deviations from the means are correctly labeled.

\begin{theorem}
\label{thm:mainrecovery}
Let the vertices $\a_1,\ldots,\a_n\in\R^d$ be generated from a mixture of $K$ Gaussian distributions with parameters
$\bmu_1,\ldots,\bmu_K$, $\sigma_1^2,\ldots,\sigma_K^2$, and $w_1,\ldots,w_K$. Let $\theta>0$ be given, and
let
\[V_m = \{\a_i: \norm{\a_i - \bmu_m} \leq \theta\sigma_m\},\quad m=1,\dots,K.\]
Let $\epsilon>0$ be arbitrary.
Then for any $m=1,\ldots,K$,
with probability exponentially close to $1$ 
(and depending on $\epsilon$) as $n\rightarrow \infty$, for the solution $\x^*$ computed by \eqref{eq:son-clustering},
the points in $V_m$ are in the same cluster 
provided
\begin{equation}
\lambda \ge \frac{2\theta\sigma_m}{(F(\theta,d)w_m-\epsilon)n}.
\label{eq:mainrecoverylambdalb}
\end{equation}
Here, $F(\theta,d)$ denotes the cumulative density function of the chi-squared distribution with $d$ degrees of freedom (which tends to $1$  rapidly as $\theta$ increases).  
Furthermore, the cluster associated with $V_m$ is distinct from the cluster associated with $V_{m'}$, $1\le m<m'<k$, provided that
\begin{equation}
\lambda <\frac{\norm{\bmu_{m}-\bmu_{m'}}}{2(n-1)}.
\label{eq:mainrecoverylambdaub}
\end{equation}
\label{thm:recovery}
\end{theorem}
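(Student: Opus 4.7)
The plan is to split Theorem~\ref{thm:mainrecovery} into its two independent assertions: the merging condition \eqref{eq:mainrecoverylambdalb} and the separation condition \eqref{eq:mainrecoverylambdaub}. For merging I will invoke the sufficient condition of Theorem~\ref{thm:clustchar}(b) with an explicit choice of dual multipliers. For separation I will argue directly from the KKT equation~\eqref{eq:KKTcond} via an elementary diameter bound on any putative cluster.

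To verify the merging condition, take $C = V_m$ in Theorem~\ref{thm:clustchar}(b) and try the ansatz
\[\z_{ij}^{*} := \frac{\a_i-\a_j}{\lambda|V_m|},\qquad i,j\in V_m,\ i\ne j.\]
Antisymmetry is immediate, and a short calculation yields
\[\lambda\sum_{j\in V_m-\{i\}} \z_{ij}^{*} \;=\; \a_i - \frac{1}{|V_m|}\sum_{l\in V_m}\a_l,\]
which matches the first line of~\eqref{eq:zstar1} exactly. The only nontrivial requirement is $\norm{\z_{ij}^{*}}\le 1$, i.e.\ $\norm{\a_i-\a_j}\le \lambda |V_m|$. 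Since $\a_i$ and $\a_j$ both lie within $\theta\sigma_m$ of $\bmu_m$, the triangle inequality bounds the left side by $2\theta\sigma_m$, so it suffices that $\lambda |V_m|\ge 2\theta\sigma_m$. Because $|V_m|$ is a sum of independent indicators with expectation at least $F(\theta,d)w_m n$, a Chernoff bound gives $|V_m|\ge (F(\theta,d)w_m-\epsilon)n$ with probability exponentially close to~$1$, and this matches \eqref{eq:mainrecoverylambdalb} exactly.

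To verify the separation condition, rearrange \eqref{eq:KKTcond} and use $\norm{\w_{ij}^{*}}\le 1$ to obtain the per-sample bound $\norm{\x_i^{*}-\a_i}\le\lambda(n-1)$ for every $i$. Suppose for contradiction that the cluster containing $V_m$ coincides with the cluster containing $V_{m'}$, with common value $\hat\x$. Averaging the per-sample bound over $i\in V_m$ and applying the triangle inequality gives $\norm{\hat\x-\bar\a_{V_m}}\le \lambda(n-1)$, where $\bar\a_{V_m}$ is the centroid of $V_m$; the same bound holds for $V_{m'}$. By spherical symmetry of $N(\bmu_m,\sigma_m^2 I)$ about $\bmu_m$, the conditional expectation of $\a_i-\bmu_m$ given $\a_i\in V_m$ vanishes, so a vector Hoeffding inequality on these bounded centered vectors yields $\norm{\bar\a_{V_m}-\bmu_m}\to 0$ with high probability (and similarly for $m'$). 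Combining via the triangle inequality gives $\norm{\bmu_m-\bmu_{m'}}\le 2\lambda(n-1)+o(1)$, contradicting \eqref{eq:mainrecoverylambdaub} for all sufficiently large $n$.

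The geometric content is elementary once the multiplier ansatz is found and the KKT diameter bound is written down; the main obstacle, and the bulk of the work, is probabilistic bookkeeping. One must combine the Chernoff bound on $|V_m|$, the Hoeffding-type bound on $\bar\a_{V_m}-\bmu_m$, and a union bound over the $\binom{K}{2}$ pairs $(m,m')$ into a single event of probability exponentially close to~$1$, and keep careful track of how the single parameter $\epsilon$ absorbs the various small errors.
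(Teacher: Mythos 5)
Your first part (the merging condition) is exactly the paper's argument: same ansatz $\z_{ij}^*=(\a_i-\a_j)/(\lambda|V_m|)$ fed into Theorem~\ref{thm:clustchar}(b), same diameter bound $\norm{\a_i-\a_j}\le 2\theta\sigma_m$, same Chernoff bound on $|V_m|$. No issues there.

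Your second part takes a genuinely different route from the paper, and it has a gap. You replace the paper's witness-pair argument by an averaged one: you bound $\norm{\hat\x-\overline{\a}_{V_m}}\le\lambda(n-1)$ and then claim $\overline{\a}_{V_m}\to\bmu_m$ because ``the conditional expectation of $\a_i-\bmu_m$ given $\a_i\in V_m$ vanishes by spherical symmetry.'' That symmetry claim is false as stated: $V_m$ is defined as \emph{all} samples landing in the ball $B(\bmu_m,\theta\sigma_m)$, including samples drawn from components $m''\ne m$, whose conditional law on that ball is not symmetric about $\bmu_m$. Consequently $E[\a_i-\bmu_m\mid \a_i\in V_m]$ is generally nonzero, and $\overline{\a}_{V_m}$ concentrates around a biased point. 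The theorem's only hypothesis for this part is \eqref{eq:mainrecoverylambdaub}, which places no lower bound on $\norm{\bmu_m-\bmu_{m'}}$ in units of $\sigma$; when the means are within a few standard deviations of each other the contamination is a constant fraction of $V_m$, the bias is of order $\sigma_m$ rather than $o(1)$, and your final inequality $\norm{\bmu_m-\bmu_{m'}}\le 2\lambda(n-1)+o(1)$ fails to produce a contradiction. Even after restricting to samples genuinely drawn from component $m$ (where symmetry does hold), the averaging approach still loses an additive $o(1)$ slack, so it only reaches the threshold \eqref{eq:mainrecoverylambdaub} asymptotically rather than exactly. The paper avoids all of this: by mirror symmetry of $N(\bmu_m,\sigma_m^2 I)$, with probability exponentially close to $1$ there is at least one $\a_i\in V_m$ drawn from component $m$ with $(\a_i-\bmu_m)^T(\bmu_{m'}-\bmu_m)\le 0$, and likewise one $\a_{i'}\in V_{m'}$; for that single pair one gets the deterministic inequality $\norm{\a_i-\a_{i'}}\ge\norm{\bmu_m-\bmu_{m'}}$ with no error term, and combining with your (correct) bound $\norm{\x_i^*-\a_i}\le\lambda(n-1)$ gives $\x_i^*\ne\x_{i'}^*$ exactly under \eqref{eq:mainrecoverylambdaub}. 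You should replace the centroid-concentration step with this single-witness argument (or restrict the averaging to the uncontaminated subsample and accept the weaker, slack-dependent threshold).
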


\begin{proof}
 Let $\epsilon>0$ be fixed.  Fix an $m\in\{1,\ldots,K\}$. 
First, we show that all the points in $V_m$ are in the same
cluster.
The usual technique for proving a recovery result is to find subgradients to satisfy the sufficient condition, which in this case is Theorem~\ref{thm:clustchar} taking $C$ in the theorem to be $V_m$.  Observe that conditions \eqref{eq:zstar1} involve equalities and norm inequalities.  A standard technique in the literature (see, e.g., Cand\`es and Recht \cite{Candes2009}) is to find the least-squares solution to the equalities and then prove that it satisfies the inequalities.  This is the technique we adopt herein.  The conditions \eqref{eq:zstar1} are in sufficiently simple form that we can write down the least-squares solution in closed form; it turns out to be: 
\[\z_{ij}^* = \frac{1}{\lambda |V_m|}(\a_i - \a_j) \quad \forall i,j\in V_m, i\ne j.\]
It follows by construction (and is easy to check) that this formula satisfies the equalities in \eqref{eq:zstar1}, so the remaining task is to show that the norm bound $\norm{\z_{ij}^*}\le 1$ is satisfied.  By definition  of $V_m$, $\Vert\a_i-\a_j\Vert\le 2\theta\sigma_m$.   The probability that an arbitrary sample $\a_i$ is associated with mean $\bmu_m$ is $w_m$.  
Furthermore, with probability $F(\theta,d)$, this sample 
satisfies $\norm{\a_i-\bmu_m}\le \theta\sigma_m$, i.e., lands in $V_m$.  Since the second choice in the mixture of Gaussians is conditionally independent from the first, the overall probability that $\a_i$ lands in $V_m$ is $F(\theta,d)w_m$.  Therefore, $E[|V_m|]=F(\theta,d)w_mn$.  By the Chernoff bound for the tail of a binomial distribution,  it follows that the probability that $|V_m|\ge (F(\theta,d)w_m-\epsilon)n$ is exponentially close to 1 for a fixed $\epsilon>0$. Thus, provided
$\lambda \ge 2\theta\sigma_m/((F(\theta,d)w_m-\epsilon)n)$, we have constructed a solution to \eqref{eq:zstar1} with probability exponentially close to 1.

For the second part of the theorem, suppose $1\le m<m'\le K$.  For each sample $\a_i$ associated with $\bmu_m$ satisfying
$\Vert\a_i-\bmu_m\Vert\le \theta\sigma_m^2$ (i.e., lying in $V_m$),  the probability
is $1/2$ that \[(\a_i-\bmu_m)^T(\bmu_{m'}-\bmu_m)\le 0\] by the fact that the spherical Gaussian distribution has mirror-image symmetry about any hyperplane through its mean.  Therefore, with probability exponentially close to 1 as $n\rightarrow\infty$, we can assume that at least one $\a_i\in V_m$ satisfies the above inequality.
Similarly, with probability exponentially close to 1, at least one sample $\a_{i'}\in V_{m'}$ satisfies \[(\a_{i'}-\bmu_{m'})^T(\bmu_{m}-\bmu_{m'})\le 0.\]
Then 
\begin{align}
\norm{\a_i-\a_{i'}}^2 &=\norm{\a_i-\bmu_m-\a_{i'}+\bmu_{m'}+\bmu_m-\bmu_{m'}}^2 \notag \\
&=\norm{\a_i-\bmu_m-\a_{i}+\bmu_{m'}}^2 +
2(\a_i-\bmu_m)^T(\bmu_{m}-\bmu_{m'}) \notag \\
&\hphantom{=}\quad\mbox{}-2(\a_{i'}-\bmu_{m'})^T(\bmu_m-\bmu_{m'}) + \norm{\bmu_{m}-\bmu_{m'}}^2 \notag\\
&\ge \norm{\bmu_{m}-\bmu_{m'}}^2, \label{eq:aimumineq}
\end{align}
where, in the final line, we used the two inequalities derived earlier in this paragraph.

Consider the first-order optimality conditions for equation \eqref{eq:son-clustering}, which are given by \eqref{eq:KKTcond}. 
Apply the triangle inequality to the summation in \eqref{eq:KKTcond} to obtain,
\begin{align}
\norm{\x_i^* - \a_i} &\leq \lambda(n-1),\mbox{ and } \label{xisai} \\
\norm{\x_{i'}^* - \a_{i'}} &\leq \lambda(n-1).\label{xisai2}
\end{align}
Therefore,
\begin{align*}
    \norm{\x_i^*-\x_{i'}^*} &=\norm{\a_{i}-\a_{i'}+\x_i^*-\a_i-\x_{i'}^*+\a_{i'}} \\
    &\ge \norm{\a_{i}-\a_{i'}}-\norm{\x_i^*-\a_i}-\norm{\x_{i'}^*-\a_{i'}} && \mbox{(by the triangle inequality)} \\
    &\ge \norm{\bmu_{m'}-\bmu_m}-2\lambda(n-1) &&\mbox{(by \eqref{eq:aimumineq}, \eqref{xisai}, and \eqref{xisai2}).}
\end{align*}
Therefore, we conclude that $\x_i^*\ne \x_{i'}^*$, i.e., that $V_m$ and $V_{m'}$ are not in the same cluster, provided that the right-hand side of the preceding inequality is positive, i.e., 
\[\lambda <\frac{\norm{\bmu_{m}-\bmu_{m'}}}{2(n-1)}. \]
This concludes the proof of the second statement.
\end{proof}

Clearly, there exists a $\lambda$ so that the solution to \eqref{eq:son-clustering} can simultaneously place all points in $V_m$ into the same cluster for each $m=1,\ldots,K$ while distinguishing the clusters provided that the right-hand side of \eqref{eq:mainrecoverylambdaub} exceeds the right-hand side of \eqref{eq:mainrecoverylambdalb}.  
In order to obtain a compact inequality that guarantees this condition, let us fix some values.  For example, let us take $\theta=2d$ and let $c_d=F(2d,d)$.  The Chernoff bound implies that $c_d\rightarrow 1$ exponentially fast in $d$.   Let $w_{\min}$ be the minimum weight in the mixture of Gaussians. Let $\sigma_{\max}$ denote the maximum standard deviation in the distribution.  Finally, let us take $\epsilon=c_dw_{\min}/2$.  Then the above theorem states there is a $\lambda$ such that with probability tending to $1$ exponentially fast in $n$, the points in $V_m$, for any $m=1,\ldots,K$ are each in the same cluster, and these clusters are distinct, provided that
\begin{equation}
\min_{1\le m<m'\le K}\norm{\bmu_m-\bmu_{m'}}>\frac{16d\sigma_{\max}}{c_dw_{\min}}.
\label{eq:ourbound}
\end{equation}
Compared to the Panahi et al.\ bound \eqref{eq:panahibound}, we have removed the dependence of the right-hand side on $n$ as well as the factor of $K$.  (The dependence of the Panahi et al.\ bound on $d$ is not made explicit so we cannot compare the two bounds' dependence on $d$.  Note that there is still an implicit dependence on $K$  in \eqref{eq:ourbound} since necessarily $w_{\min}\le 1/K$.)

\section{Discussion}
The analysis of the mixture of Gaussians in the preceding section used only standard bounds and simple properties of the normal distribution, so it should be apparent to the reader that many extensions of this result (e.g., Gaussians with a more general covariance matrix, uniform distributions, many kinds of deterministic distributions) are possible.  The key technique is Theorem~\ref{thm:clustchar}, which essentially decouples the clusters from each other so that each can be analyzed in isolation.   Such a theorem does not apply to most other clustering algorithms, or even to sum-of-norm clustering in the case of unequal weights, so obtaining similar results for other algorithms remains a challenge.  
\bibliography{optimization}
\bibliographystyle{plain}
\end{document}